\newtheorem*{theorem}{Theorem}
\newtheorem*{proof}{Proof}
\title{Learning Hamiltonian
Density Using DeepONet}
\author{%
	Baige Xu\\
	Graduate School of Science\\
	Kobe University\\
	Kobe, Japan \\
\texttt{baigexu@stu.kobe-u.ac.jp} \\
	\And
	Yusuke Tanaka \\
	NTT Communication Science Laboratories\\
	NTT Corporation\\
	Kyoto, Japan \\
\texttt{ysk.tanaka@ntt.com} \\
    \And
	Takashi Matsubara \\
	Graduate School of Information Science and Technology\\
	Hokkaido University\\
	Sapporo, Japan \\
\texttt{matsubara@ist.hokudai.ac.jp} \\
    \And
	Takaharu Yaguchi\\
	Graduate School of Science\\
	Kobe University\\
	Kobe, Japan \\
\texttt{yaguchi@pearl.kobe-u.ac.jp} \\
}
\begin{document}
\maketitle

\begin{abstract}
In recent years, deep learning for modeling physical phenomena which can be described by partial differential equations (PDEs) have received significant attention. 
For example, for learning Hamiltonian mechanics, methods based on deep neural networks such as Hamiltonian Neural Networks (HNNs) and their variants have achieved progress.
However, 
existing methods typically
depend on the discretization of data, and the determination of required differential operators is often necessary. 
Instead, in this work, we propose an operator learning approach 
for modeling wave equations. In particular, we present a method to compute the variational derivatives that are needed to formulate the equations using the automatic differentiation algorithm.
The experiments demonstrated that 
the proposed method
is able to learn the operator 
that defines the Hamiltonian density of waves from data with unspecific discretization without 
determination of the differential operators.
\end{abstract}

\keywords{Operator learning \and Hamiltonian mechanics \and Wave equations \and Variational derivatives}

\section{Introduction}
Many physical phenomena in nature can be described by partial differential equations (PDEs) \cite{pde}, such as diffusion in liquids and gases, thermal conduction, and wave propagation \cite{diff,thermal,wave}.
Modeling and solving these physically meaningful equations have practical significance 
in many applications like weather forecasting, material processing, and aircraft control \cite{weather,material,air}.
In recent years, the techniques of modeling and solving 
PDEs based on deep learning have been widely studied instead of conventional numerical solvers.
For example, for learning Hamiltonian mechanics, which provide a foundational framework of diverse physical phenomena, Hamiltonian Neural Networks (HNNs) and their variants have been actively developed 
as powerful tools for 
modeling equations \cite{hnn,symp}.
However, since these methods are based on deep neural networks, they depend on the discretization of the data. 
Also, the differential operators appearing in the equations, such as $(\partial_x,\partial_{xx},\cdots)$, 
and their discretization are necessary to be determined when learning the equations. 

To give an example of such equations, consider modeling Hamiltonian PDEs that describe 
many wave phenomena: 
\begin{equation*}
    \dfrac{\partial u}{\partial t}=\mathcal{D}
    \dfrac{\delta \mathcal{H}}{\delta u},
\end{equation*}
where $u=u(t,x)$ is a function defined on a Hilbert space $\mathcal{X}$, and $\mathcal{D}$ is a skew-symmetric operator, which includes specific examples such as
\begin{equation*}
    \left(
    \begin{array}{cc}
       0 & \mathrm{Id} \\
       -\mathrm{Id} & 0
    \end{array}
    \right),\quad
    \dfrac{\partial}{\partial x},\quad
    \left(
    \begin{array}{cc}
        0 & \dfrac{\partial}{\partial x} \\
    \dfrac{\partial}{\partial x} & 0
    \end{array}
    \right). 
\end{equation*}
$\mathcal{H}$ is the Hamiltonian, the energy functional of the equation, and is defined as $\mathcal{H}=\int H(u,u_x,u_{xx},\cdots)\mathrm{d}x$. The functional $H$ is called the Hamiltonian density of the equation, representing the local energy distribution within the dynamical system. $\dfrac{\delta \mathcal{H}}{\delta u}$ is called the variational derivative of $\mathcal{H}$, whose definition is given as
\begin{equation*}
\dfrac{\mathrm{d}}{\mathrm{d}\epsilon}\mathcal{H}[u+\epsilon v]|_{\epsilon = 0}=\langle \dfrac{\delta \mathcal{H}}{\delta u}, v \rangle _{L^2},\quad \forall u,v\in\mathcal{X},
\end{equation*}
where $\langle \cdot , \cdot \rangle _{L^2}$ represents the inner product on $\mathcal{X}$.

When modeling such equations using HNNs or other methods based on neural networks, usually, $H$ is modeled by using a neural network  $H_{\mathrm{NN}}$.
To train the neural network, it is necessary to determine differential variables such as $(u,u_x,u_{xx},\ldots)$ that appear in the Hamiltonian density 
and their discretizations, which is not easy without detailed knowledge of target phenomena. 
Therefore, such methods are not ideal in practice. 

To overcome this limitation, in this study, we propose an operator learning approach for modeling wave equations, that does not rely on the discretizations of data and does not require determining the variables involving differential operators or their discretizations.
In particular, we show that the variational derivative of $\mathcal{H}$ can be obtained from the gradient of $\mathcal{H}$ computed via automatic differentiation algorithm.
The results of numerical experiments show that the proposed method 
can in fact learn the Hamiltonian density of the wave equations. 

\section{Operator Learning and DeepONet}
Operator learning addresses the challenges faced by traditional neural networks by learning mappings between infinite-dimensional function spaces directly. Among operator learning methods, Neural Operators have gained significant attention for their ability to combine the universality of operator learning with the expressive power of deep learning architectures \cite{no}. One notable variant 
of neural operators is DeepONet, which is designed to efficiently learn and approximate arbitrary nonlinear continuous operators \cite{don}. 
DeepONet has demonstrated remarkable performance in solving parametric PDEs and other high-dimensional problems. 

Consider the general architecture of a DeepONet, when learning a simple dynamical system described by 
an ODE:
\begin{equation*}
    \dfrac{\mathrm{d}u(t)}{\mathrm{d}t}=f(t).
\end{equation*}
DeepONet represents the target operator $G:f \mapsto u$ as a pair of sub networks.
The first one is called the branch network, which processes the
values of the input function $u$ sampled at a specific finite set of location points $\{x_i\}_{i=1}^m$, taking $(f(x_1),f(x_2),...,f(x_m))^{\top}\in\mathbb{R}^m$ as input and $(b_1,\cdots,b_p)\in\mathbb{R}^p$ as output.
The second one is called trunk network, which evaluates the operator at the target locations. For an unstacked DeepONet, there is just one trunk network. It takes $y=(y_1,\cdots,y_n)\in\mathbb{R}^n$ as input and $(t_1,t_2,...,t_p)^{\top} \in \mathbb{R}^p$ as output. 
Then the outputs of two sub networks are merged together by taking their inner product 
to approximate the
target operator $G$:
\begin{equation*}
    G(f)(y) \simeq G_\mathrm{NO}(f)(y) = 
    \sum_{k=1}^{p} b_k t_k.
\end{equation*}
In fact, $G_\mathrm{NO}(f)(y)$ can be viewed as a function of $y$ conditioning on $u$. 
The loss function of DeepONet is given as 
\begin{equation*}
    \mathcal{L} = \int {\lvert \lvert G_{\mathrm{target}}(f)(y)-G_{\mathrm{model}}(f)(y)} \rvert \rvert ^2 \mathrm{d}y,
\end{equation*}
which is typically approximated by, e.g., 
\begin{equation*}
    \mathcal{L} \approx \sum_{y_i} {\lvert \lvert G_{\mathrm{target}}(f)(y_i)-G_{\mathrm{model}}(f)(y_i) \rvert \vert}^2,
\end{equation*}
where $y_i$'s are given collocation points. By minimizing the above loss function, DeepONet learns the target operator.

\section{Operator Learning for Modeling Hamiltonian Partial Differential Equations Using DeepONet}
In this section, we propose a learning method 
for modeling Hamiltonian partial differential equations using DeepONet. In particular, we propose a method to compute the 
variational derivative of 
Hamiltonian $\mathcal{H}$ that is required to determine the PDEs 
using the automatic differentiation algorithm.

\subsection{Learning Hamiltonian Density Using DeepONet}
In the proposed model, we use DeepONet to learn an operator that maps the state variable to the energy density.
The input of the branch network is 
the values $(u(x_1),\cdots,u(x_N))^\top$ of the function $u(t,x)$ at a finite set of position points $\{x_i\}_{i=1}^N$ as the input function to DeepONet.
The trunk network is given a vector $y$, which represents the points in the domain 
of the output function. The output of DeepONet is 
the inner product of the two sub network outputs $(c_1,\cdots,c_p)^\top \in \mathbb{R}^p$ and $(\psi_1,\cdots,\psi_p)^\top \in \mathbb{R}^p$, as
\begin{equation*}
    H_{NO} = \sum_{k=1}^{p} c_k \psi_k.
\end{equation*}
We train the networks so that $H_{NO}$ approximates the target Hamiltonian density.
By integrating the learned Hamiltonian density $H_{NO}$, we can obtain the learned Hamiltonian $\mathcal{H}_{NO}$.

To train the networks we need to compute
the variational derivatives $\dfrac{\delta \mathcal{H}_{NO}}{\delta u}$ of the Hamiltonian. 
Generally speaking, the variational derivatives of $\mathcal{H}_{NO}$ can be 
obtained by partial integration using
\begin{equation*}
    \dfrac{\delta \mathcal{H}_{NO}}{\delta u}=\dfrac{\partial H_{NO}}{\partial u} - \partial _x (\dfrac{\partial H_{NO}}{\partial u_x})+ \partial _x ^2 (\dfrac{\partial H_{NO}}{\partial u_{xx}})-\cdots;
\end{equation*}
however, in the model, $H_{NO}$ learned by DeepONet is not given as such a form in which this calculation can be performed.
Therefore, 
a method for finding variational derivatives must be newly designed. 

\subsection{Deriving Variational Derivatives of Hamiltonian Using Automatic Differentiation Algorithm}
To derive a method for finding variational derivatives, we need to consider the roles of the trunk and the branch networks. 
In fact, in the architecture of DeepONet, 
the trunk network provides a basis $(\psi_1,\cdots,\psi_p)$ for the target function space, and the branch network provides the coefficients $(c_1,\cdots,c_p)$ of the basis. 
In particular, $(c_1,\cdots,c_p)$ is obtained by introducing a basis $L_i$ of the dual space of subspaces of the input function space, where $L_i$ satisfies $L_i(u)=u(x_i)$.
Then, introducing a non-contradictory basis $\phi_j$ such that $L_i \phi_j=\delta_{ij}$, then the input function $u$ that is tied to the discrete value vector $(u(x_1),\cdots,u(x_N))^\top$ 
can be represented as
\begin{equation}\label{u}
  \begin{aligned}
    u&=(L_1(u))\phi_1 +\cdots+ (L_N(u))\phi_N 
    = u(x_1)\phi_1+\cdots+u(x_N)\phi_N.
  \end{aligned}
\end{equation}
It is important to notice that we can compute derivatives with respect to the vector $(u(x_1),\cdots,u(x_N))^\top$
by using the automatic differentiation algorithm. We want to associate the computed derivatives by this algorithm with the variational derivative.

On the other hand, let $\mathcal{H}:(\mathcal{X},\langle,\rangle _{L^2} )\to \mathbb{R}$ be a smooth enough Hamiltonian and for any $u \in \mathcal{X}$, 
\begin{equation*}
    v\mathcal{H}=\mathrm{d}\mathcal{H}(v),\quad \forall v \in \mathcal{T}_u\mathcal{X}
\end{equation*}
holds. That means $\mathrm{d}\mathcal{H}$ is defined as a linear functional $\mathrm{d}\mathcal{H}:v \in \mathcal{T}_u\mathcal{X} \mapsto \mathrm{d}\mathcal{H}(v)\in \mathbb{R}$ and $\mathrm{d}\mathcal{H} \in \mathcal{T}_u^*\mathcal{X}$.
$\mathcal{T}_u\mathcal{X}$ and $\mathcal{T}_u^*\mathcal{X}$ denote the tangent space and cotangent space on $\mathcal{X}$, respectively. 

By Riesz's representation theorem, there exists a $v_{\mathcal{H}}\in \mathcal{T}_u\mathcal{X}$ such that
\begin{equation}\label{grad}
    \mathrm{d}\mathcal{H}(w)= \langle v_{\mathcal{H}},w \rangle, \quad \forall w \in \mathcal{T}_u\mathcal{X}.
\end{equation}
Also, from the definition of $\dfrac{\delta \mathcal{H}}{\delta u}$, it holds that
\begin{equation}\label{riesz}
    \mathrm{d}\mathcal{H}(w)= \langle \dfrac{\delta \mathcal{H}}{\delta u},w \rangle _{L^2}, \quad \forall w \in \mathcal{T}_u\mathcal{X}.
\end{equation}
That means the gradient $V_{\mathcal{H}}$ defined by any inner product is related to the variational derivative using the $L^2$ inner product.
Using the above \eqref{u}, \eqref{grad} and \eqref{riesz}, the variational derivative of $\mathcal{H}_{NO}$ in DeepONet training can be obtained using the gradient obtained by automatic differentiation algorithm.

\begin{theorem}
Suppose that $\Tilde{\mathcal{X}}=\mathrm{span}\{\phi_1,...,\phi_N\}$ is a subspace of a Hilbert space $\mathcal{X}$, $\phi_i = \phi_i(x) \in \mathcal{X},i=1,...,N$. Let $\mathcal{H}_{NO}:(\Tilde{\mathcal{X}},\langle,\rangle _{L^2} ) \to \mathbb{R}$ be a sufficiently smooth Hamiltonian learned by DeepONet. If $\nabla_{AD}\mathcal{H}_{NO}$ is a gradient of $\mathcal{H}_{NO}$ which is derived through automatic differentiation, for any $u \in \mathcal{X}$ and $\Tilde{u} \in \Tilde{\mathcal{X}}$, the variational derivative of $\mathcal{H}_{NO}$, $\dfrac{\delta \mathcal{H}_{NO}}{\delta u}$, can be obtained from $\nabla_{AD}\mathcal{H}_{NO}$:
{\small
\begin{equation*}
\begin{aligned}
    \dfrac{\delta \mathcal{H}_{NO}}{\delta u}=
    (\nabla_{AD}\mathcal{H}_{NO})^\top
    \begin{pmatrix}
            \int \phi_1^2\mathrm{d}x & \cdots & \int \phi_1 \phi_N\mathrm{d}x\\
            \vdots & \ddots & \vdots \\
            \int\phi_1 \phi_N\mathrm{d}x & \cdots & \int \phi_N^2\mathrm{d}x
    \end{pmatrix}^{-1}
    \begin{pmatrix}
            \phi_1 \\
            \vdots \\
            \phi_N
    \end{pmatrix}.
\end{aligned}
\end{equation*}
}
\end{theorem}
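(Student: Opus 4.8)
The plan is to compute the differential $\mathrm{d}\mathcal{H}_{NO}$ applied to an arbitrary tangent vector in two different ways and then match the results. The starting observation is that, by the representation \eqref{u}, every element of $\Tilde{\mathcal{X}}$ is determined by its coordinate vector $(u(x_1),\dots,u(x_N))^\top$ in the basis $\{\phi_i\}$: the biorthogonality $L_i\phi_j=\delta_{ij}$ forces the $i$-th coordinate to equal the sample value $u(x_i)$. Consequently $\nabla_{AD}\mathcal{H}_{NO}$, the gradient produced by automatic differentiation with respect to the discrete values $(u(x_1),\dots,u(x_N))$, is exactly the Euclidean gradient of $\mathcal{H}_{NO}$ in these $\phi$-coordinates.

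First I would take a tangent vector $w=\sum_{j=1}^N b_j\phi_j$ and evaluate $\mathrm{d}\mathcal{H}_{NO}(w)$ by the chain rule. Perturbing $\Tilde{u}$ by $\epsilon w$ perturbs its coordinate vector by $\epsilon(b_1,\dots,b_N)^\top$, so $\mathrm{d}\mathcal{H}_{NO}(w)=(\nabla_{AD}\mathcal{H}_{NO})^\top \mathbf{b}$ with $\mathbf{b}=(b_1,\dots,b_N)^\top$. Next I would use \eqref{riesz}: seeking the variational derivative within $\Tilde{\mathcal{X}}$, write $\dfrac{\delta \mathcal{H}_{NO}}{\delta u}=\sum_{k=1}^N c_k\phi_k$, so that \eqref{riesz} gives $\mathrm{d}\mathcal{H}_{NO}(w)=\langle\sum_k c_k\phi_k,\sum_j b_j\phi_j\rangle_{L^2}=\mathbf{c}^\top G\,\mathbf{b}$, where $G$ is the Gram matrix with entries $G_{kj}=\int\phi_k\phi_j\,\mathrm{d}x$. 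These are precisely the two realizations of $\mathrm{d}\mathcal{H}_{NO}$ referenced by \eqref{grad} and \eqref{riesz}, one in the $\phi$-coordinate gradient and one in the $L^2$ pairing.

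Equating the two expressions for all $\mathbf{b}\in\mathbb{R}^N$ yields $(\nabla_{AD}\mathcal{H}_{NO})^\top=\mathbf{c}^\top G$. Since $\{\phi_1,\dots,\phi_N\}$ is a basis, $G$ is symmetric positive definite, hence invertible, and therefore $\mathbf{c}^\top=(\nabla_{AD}\mathcal{H}_{NO})^\top G^{-1}$. Substituting back into $\dfrac{\delta \mathcal{H}_{NO}}{\delta u}=\mathbf{c}^\top(\phi_1,\dots,\phi_N)^\top$ produces the claimed formula, with $G^{-1}$ appearing as the inverse of the stated Gram matrix and $(\phi_1,\dots,\phi_N)^\top$ as the final column vector.

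The hard part will be the conceptual identification in the first paragraph rather than the algebra: one must justify carefully that the AD gradient taken with respect to the sample values $(u(x_1),\dots,u(x_N))$ coincides with the gradient in the $\phi$-coordinates, which rests entirely on \eqref{u} and the biorthogonality of $\{L_i\}$ and $\{\phi_j\}$. A related subtlety is that $\dfrac{\delta \mathcal{H}_{NO}}{\delta u}$ is a priori defined through an $L^2$ pairing over all of $\mathcal{X}$, whereas the test vectors $w$ here range only over the finite-dimensional $\Tilde{\mathcal{X}}$; I would need to make explicit that the formula recovers the component of the variational derivative lying in $\Tilde{\mathcal{X}}$, which is exactly the quantity entering DeepONet training. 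Once these identifications are secured, introducing and inverting the Gram matrix is routine.
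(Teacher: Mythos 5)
Your proposal is correct and follows essentially the same route as the paper: both evaluate $\mathrm{d}\mathcal{H}_{NO}(w)$ for $w\in\Tilde{\mathcal{X}}$ in two ways---via the $L^2$ pairing with the variational derivative expanded in $\{\phi_i\}$ (producing the Gram matrix) and via the coordinate pairing with $\nabla_{AD}\mathcal{H}_{NO}$---then equate and invert the Gram matrix. The only difference is presentational: where the paper invokes Riesz's representation theorem with respect to the $\mathbb{R}^N$ inner product $\langle u,v\rangle_{\mathbb{R}^N}=\sum L_i(u)L_i(v)$ to identify the AD gradient, you justify the same identity by the chain rule and biorthogonality, and you additionally note the positive definiteness of the Gram matrix and the restriction-to-$\Tilde{\mathcal{X}}$ subtlety, both of which the paper leaves implicit.
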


\begin{proof}
For a continuous linear functional $\mathrm{d}\mathcal{H}_{NO} \in \mathcal{T}_u^*\Tilde{\mathcal{X}}$, there exists a $\dfrac{\delta \mathcal{H}_{NO}}{\delta u}=g_1\phi_1+\cdots+g_N\phi_N\in\mathcal{T}_u\Tilde{\mathcal{X}}$ such that for all $w=w_1\phi_1+\cdots+w_N\phi_N \in \mathcal{T}_u\Tilde{\mathcal{X}}$,
\begin{equation*}\label{l2}
\begin{aligned}
\mathrm{d}\mathcal{H}_{NO}(w)&=\langle \dfrac{\delta \mathcal{H}_{NO}}{\delta u},w\rangle_{L^2}
=\int (\dfrac{\delta \mathcal{H}_{NO}}{\delta u}w)\mathrm{d}x \\
&= \int ((g_1\phi_1 + \cdots + g_N\phi_N)(w_1\phi_1+\cdots+w_N\phi_1))\mathrm{d}x \\
&=(g1,\cdots,g_N)
    \begin{pmatrix}
        \int \phi_1^2\mathrm{d}x & \cdots & \int \phi_1 \phi_N\mathrm{d}x\\
            \vdots & \ddots & \vdots \\
            \int\phi_1 \phi_N\mathrm{d}x & \cdots & \int \phi_N^2\mathrm{d}x
    \end{pmatrix}
    \begin{pmatrix}
        w_1\\ \vdots \\w_N
    \end{pmatrix}.
\end{aligned}
\end{equation*}
In the subspace $\Tilde{\mathcal{X}}=\mathrm{span}\{\phi_1,...,\phi_N\}$, we can introduce the following two inner product: 
\begin{equation*}
    \langle \cdot , \cdot \rangle _{L^2}: \langle u,v \rangle _{L^2} = \int (uv)\mathrm{d}x,
\end{equation*}
\begin{equation*}
    \langle \cdot , \cdot \rangle _{\mathbb{R}^N}: \langle u,v \rangle _{\mathbb{R}^N} = \sum L_i(u)L_i(v) = \sum u_i v_i.
\end{equation*}
From Riesz's representation theorem,
\begin{equation*}
    \begin{aligned}
        \mathrm{d}\mathcal{H}_{NO}(w)&=
        \langle \dfrac{\delta \mathcal{H}_{NO}}{\delta u},w\rangle_{L^2}
        =\langle \nabla \mathcal{H}_{NO},w \rangle _{\mathbb{R}^N} \\
        &= \sum _{i=1}^N (\nabla \mathcal{H}_{NO})_i w_i \\
        &= (\nabla_{AD}\mathcal{H}_{NO})^\top
        \begin{pmatrix}
        w_1\\ \vdots \\w_N
        \end{pmatrix}.
    \end{aligned}
\end{equation*}
Therefore, we get
\begin{equation*}
\begin{aligned}
    &(g_1,\cdots,g_N)= (\nabla_{AD}\mathcal{H}_{NO})^\top
    \begin{pmatrix}
            \int \phi_1^2\mathrm{d}x & \cdots & \int \phi_1 \phi_N\mathrm{d}x\\
            \vdots & \ddots & \vdots \\
            \int\phi_1 \phi_N\mathrm{d}x & \cdots & \int \phi_N^2\mathrm{d}x
    \end{pmatrix}^{-1},
\end{aligned}
\end{equation*}
and hence
\begin{equation*}
\begin{aligned}
    \dfrac{\delta \mathcal{H}_{NO}}{\delta u}
    &= g_1\phi_1 + \cdots + g_N\phi_N& \\
    &=(\nabla_{AD}\mathcal{H}_{NO})^\top
    \begin{pmatrix}
            \int \phi_1^2\mathrm{d}x & \cdots & \int \phi_1 \phi_N\mathrm{d}x\\
            \vdots & \ddots & \vdots \\
            \int\phi_1 \phi_N\mathrm{d}x & \cdots & \int \phi_N^2\mathrm{d}x
    \end{pmatrix}^{-1}
    \begin{pmatrix}
            \phi_1 \\
            \vdots \\
            \phi_N
    \end{pmatrix}.
\end{aligned}
\end{equation*}
$\hfill\square$
\end{proof}

\section{Numerical Example: Wave Equations}
In this section, we demonstrate the effectiveness of our proposed learning method by using the linear wave equation 
$$
\dfrac{\partial^2 u}{\partial t^2} = \dfrac{\partial^2 u}{\partial x^2}
$$ 
as an example. The wave equation can be written in the Hamiltonian form as follows: 
\begin{equation*}
    \dfrac{\partial w}{\partial t} = 
    \begin{pmatrix}
        0 & 1 \\ -1 & 0
    \end{pmatrix}
    \dfrac{\delta \mathcal{H}}{\delta w},\quad w =
    \begin{pmatrix}
        u \\ u_t
    \end{pmatrix}.
\end{equation*}
\textbf{Data}
We considered a wave system whose Hamiltonian is known as $\mathcal{H}=\dfrac{1}{2}(u_x^2+u_t^2)$. The time-space region is set to 
$[0,2]\times[0,1]$. 
The initial condition and the boundary condition is set to $u(0,x)=f(x),\\u_t(0,x)=0,u(t,0)=u(t,1)$. Under these conditions, the solution of the wave equation is known as $u(t,x)=\dfrac{1}{2}(f(t+x)+f(t-x))$. 
We used 
a dataset of the solutions for 1000 different $f(x)$.

\textbf{Training}
We adopted two multi-layer perceptrons (MLPs) to implement DeepONet. The networks has 3 layers, 200 hidden units, and the tanh activation functions. We used the Adam optimizer to minimize the loss function of DeepONet. The learning rate is set to $10^{-4}$, and the epochs of iteration is 20000.

\textbf{Results}
Figure \ref{fig:fig1} shows the true solution $u$ and $u_t$, and Figure \ref{fig:fig2} shows $u_{NO}$ and ${u_t}_{NO}$ learned by our model. Figure \ref{fig:fig3} shows the behavior of the Hamiltonian. 
The solution predicted by DeepONet is very similar to the true solution, but they tend to diverge over time. Also, the Hamiltonian directly computed from the physical quantities exhibits similar behavior,  which may be due to the insufficient number of data. 
\begin{figure}[H]
	\begin{center}
	\includegraphics[width=0.7\textwidth]{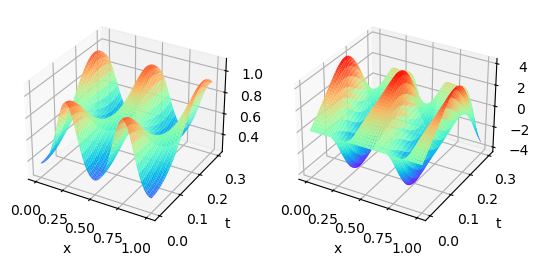}
	\end{center}
	\caption{True $u$ and $u_t$ of the equation.}
	\label{fig:fig1}
\end{figure}

\begin{figure}[H]
	\begin{center}
	\includegraphics[width=0.7\textwidth]{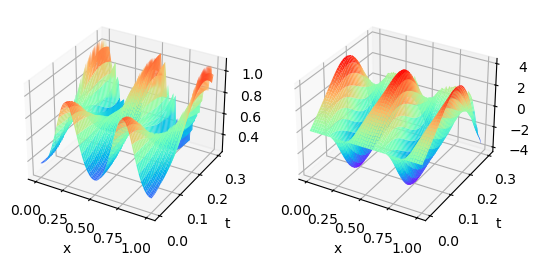}
	\end{center}
	\caption{Learned $u_{NO}$ and ${u_t}_{NO}$ of the equation.}
	\label{fig:fig2}
\end{figure}

\begin{figure}[H]
	\begin{center}
	\includegraphics[width=0.4\textwidth]{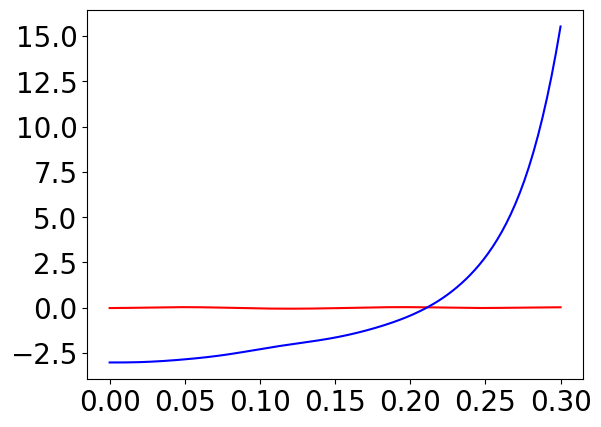}
	\end{center}
	\caption{The behavior of Hamiltonian. Red: $\mathcal{H}_{NO}(u,u_t)$, Blue: $\mathcal{H}(u_{NO},{u_t}_{NO})$.}
	\label{fig:fig3}
\end{figure}

\section{Conclusion}
We propose an operator learning approach for modeling wave equations. In particular, we present a method to compute the variational derivative of Hamiltonian from the gradient computed by the automatic differentiation algorithm. 
Experiments have demonstrated the effectiveness of our proposal in learning the operator of energy density. Further training with more data and higher-precision numerical integration are anticipated to improve the model.

\section*{Acknowledgments}
This work is supported by JST CREST Grant Number JPMJCR1914, JST ASPIRE JPMJAP2329 and JST SAKIGAKE JPMJPR21C7.

\bibliographystyle{plain}

\end{document}